\newcommand{\bE}{\mathbb{E}}
\newcommand{\cD}{\mathcal{D}}
\newcommand{\cH}{\mathcal{H}}
\newcommand{\cX}{\mathcal{X}}
\newcommand{\cY}{\mathcal{Y}}
\newcommand{\KL}{\operatorname{KL}}
\newcommand{\TU}{\operatorname{TU}}
\newcommand{\AU}{\operatorname{AU}}
\newcommand{\EU}{\operatorname{EU}}
\newcommand{\bma}{\bar{\vtheta}}
\newcommand{\btheta}{\bar{\theta}}
\newcommand{\vx}{\bm{x}}
\newcommand{\htheta}{\hat{\theta}}
\newcommand{\vtheta}{\bm{\theta}}
\newcommand{\hvtheta}{\hat{\bm{\theta}}}
\newcommand{\sumK}{\sum_{k=1}^K}
\newcommand{\sumKK}{\sum_{k'=1}^K}
\newcommand{\ksimplex}{\Delta_K}
\newcommand{\ksimplextwo}[1][K]{\Delta_{#1}^{(2)}}
\newcommand{\on}[1]{\operatorname{#1}}
\DeclareMathOperator*{\argmax}{arg\,max}
\renewcommand{\vec}[1]{\boldsymbol{#1}}
\definecolor{log}{HTML}{FF7F0E}
\definecolor{zero-one}{HTML}{9467BD}
\definecolor{brier}{HTML}{2CA02C}
\definecolor{spherical}{HTML}{D62728}
\DeclarePairedDelimiter\floor{\lfloor}{\rfloor}
\newcommand*{\defeq}{\mathrel{\vcenter{\baselineskip0.5ex \lineskiplimit0pt
			\hbox{\footnotesize.}\hbox{\footnotesize.}}}%
	=}
\newcommand{\dataset}[1]{\textsc{\lowercase{\mbox{#1}}}}
\theoremstyle{plain}
\newtheorem{theorem}{Theorem}
\newtheorem*{theorem*}{Theorem}
\newtheorem*{corollary*}{Corollary}
\newtheorem{proposition}[theorem]{Proposition}
\newtheorem*{proposition*}{Proposition}
\theoremstyle{definition}
\newtheorem*{definition*}{Definition}
\theoremstyle{remark}
\title{Uncertainty Quantification for Machine Learning: \\ One Size Does Not Fit All}
\author{
    %Authors
    % All authors must be in the same font size and format.
    Paul Hofman\textsuperscript{\rm 1,\rm 2},
    Yusuf Sale\textsuperscript{\rm 1,\rm 2},
    Eyke Hüllermeier\textsuperscript{\rm 1,\rm 2,\rm 3}
}
\title{My Publication Title --- Single Author}
\author {
    Author Name
}
\title{My Publication Title --- Multiple Authors}
\author {
    % Authors
    First Author Name\textsuperscript{\rm 1},
    Second Author Name\textsuperscript{\rm 2},
    Third Author Name\textsuperscript{\rm 1}
}
\begin{document}

\maketitle

\begin{abstract}
Proper quantification of predictive uncertainty is essential for the use of machine learning in safety-critical applications. Various uncertainty measures have been proposed for this purpose, typically claiming superiority over other measures. In this paper, we argue that there is no single best measure. Instead, uncertainty quantification should be tailored to the specific application. To this end, we use a flexible family of uncertainty measures that distinguishes between total, aleatoric, and epistemic uncertainty of second-order distributions. These measures can be instantiated with specific loss functions, so-called proper scoring rules, to control their characteristics, and we show that different characteristics are useful for different tasks. In particular, we show that, for the task of selective prediction, the scoring rule should ideally match the task loss. On the other hand, for out-of-distribution detection, our results confirm that mutual information, a widely used measure of epistemic uncertainty, performs best. Furthermore, in an active learning setting, epistemic uncertainty based on zero-one loss is shown to consistently outperform other uncertainty measures.
\end{abstract}

\section{Introduction}\label{sec:intro}
Uncertainty quantification (UQ), the assessment of a model's uncertainty in predictive tasks, has become an increasingly prominent topic in machine learning research and practice. A common distinction is made between \emph{aleatoric} and \emph{epistemic} uncertainty \citep{hullermeier2021aleatoric}. Broadly speaking, aleatoric uncertainty originates from the inherent stochastic nature of the data-generating process, while epistemic uncertainty is due to the learner's incomplete knowledge of this process. The latter can therefore be reduced by acquiring additional information, such as more training data, whereas aleatoric uncertainty, as a characteristic of the data-generating process, is non-reducible. 

Due to inherent challenges in representing epistemic uncertainty, higher-order formalisms, most notably second-order distributions (i.e., distributions over distributions), are typically employed. Given a suitable uncertainty \emph{representation}, the key question that follows is how to appropriately quantify (total) uncertainty in terms of a numerical value, and how to decompose it into an aleatoric and an epistemic component. This choice has important consequences for downstream tasks, e.g., it determines which examples are abstained on, which inputs are flagged as out-of-distribution, or which unlabeled points are queried, and poor uncertainty quantification can thus mask true performance or misguide decisions even if the base predictor is strong.

For second-order uncertainty representations, entropy‐based measures have long been the default choice \citep{depeweg2018decomposition}. Yet recent work \citep{wimmer2023quantifying} questions whether these metrics truly satisfy the core criteria of sound uncertainty quantification. Consequently, exploring alternative uncertainty measures is a natural and necessary step toward overcoming the limitations of existing approaches. However, much of the prior work treats these alternatives as general competitors rather than asking which uncertainty measure is appropriate for a specific downstream objective, leading to conflicting or opaque conclusions.

Uncertainty measures in the machine learning literature have largely been treated as a one-size-fits-all solution, with little emphasis on adapting to specific tasks. However, recent work  \citep{muscanyiBenchmarkingUncertainties2024} suggests that different tasks may require tailored uncertainty measures. Crucially, in the absence of an observable baseline, uncertainty measures are typically evaluated empirically through (downstream) tasks such as selective prediction, out-of-distribution (OoD) detection, or active learning, each of which may require different uncertainty measures.

These considerations highlight the need for a more flexible approach to uncertainty quantification, one that aligns with the specific requirements of underlying tasks. Accordingly, leveraging a classical decomposition of proper scoring rules, we adopt a loss-based family of total, aleatoric, and epistemic uncertainty measures that subsumes traditional measures as a special case \citep{saleLabelWise2024, hofman2024quantifying, kotelevskii2025risk}. 

While recent work has explored this family of measures theoretically and cast them as alternatives to the static entropy‐based approach, it has largely overlooked the most critical factor: the downstream machine learning \emph{task} used to (empirically) \emph{evaluate} the entire uncertainty pipeline. 
By tying uncertainty measures directly to each task’s evaluation loss, we demonstrate that instantiating an uncertainty measure with that same loss yields optimal alignment with the task’s objectives.
We demonstrate this both theoretically and empirically. Theoretically, we establish a formal connection between task losses and the losses used to construct uncertainty measures, showing that optimal uncertainty quantification requires alignment between these components. Empirically, we validate our framework across important downstream tasks, including selective prediction, OoD detection, and active learning, confirming that different tasks benefit from different uncertainty measures. 
Together, these results expose why common one-size-fits-all practices can be misleading and motivate more deliberate, task-aware uncertainty evaluation.

\section{Uncertainty in Machine Learning} \label{section:uq}
In this paper, we consider a standard supervised learning setting, in which a learner is given access to a set of i.i.d. training data $\cD_{\rm{train}} = \{(\vx_i, y_i)\}_{i=1}^n \in (\cX \times \cY)^n$, where $\cX$ denotes the instance space and $\cY$ the set of outcomes. We focus on the classification scenario, where $\cY = \{1, \dots, K\}$ consists of a finite set of class labels. Additionally, we denote by $\mathbb{P}(\cY)$ the set of all probability measures on $\cY$, which can be identified with the $(K-1)$-simplex $\Delta_K$. We consider a hypothesis space $\cH$, where each hypothesis $h \in \cH$ maps instances $\vec{x}$ to probability distributions on outcomes. For brevity, we write $h(\vec{x}) = \hvtheta$ for the probabilistic prediction produced by the hypothesis $h \in \cH$, where $\hvtheta = (\htheta_1, \dots, \htheta_K) \in \ksimplex$. Similarly, $\vtheta = (\theta_1, \dots, \theta_K)$ denotes the \emph{ground-truth} (conditional) probability distribution on the outcomes given a query instance $\vec{x} \in \cX$. Finally, we denote the extended real number line by $\overline{\mathbb{R}} \defeq \mathbb{R} \cup \{-\infty, +\infty\}$.

\emph{Uncertainty Representation.} A probabilistic model $h$ predicts a probability distribution that captures \emph{aleatoric} uncertainty about the outcome $y \in \cY$, but pretends full certainty about the distribution $\vtheta$ itself. In order to represent \textit{epistemic} uncertainty, we consider a Bayesian representation of uncertainty. Hence, we assume that we have access to a posterior distribution $q(h \mid \cD)$. The posterior distribution gives rise to a distribution over distributions $\vtheta$ through $$Q(\vtheta) = \int_\cH \llbracket h(\vx) = \vtheta \rrbracket \, dq(h \mid \cD) \, , 
$$ 
with $Q \in \ksimplextwo$ and $\ksimplextwo$ denotes the set of all probability distributions on $\ksimplex$ (\emph{viz.} second-order distributions). To make predictions, a representative first-order distribution is generated by model averaging $\bma = \int_{\cH} h(\vx) \, dq(h \mid \cD)$. 
In practice, we usually only have access to samples of the posterior, e.g., through an ensemble of predictors. Thus, we use a finite approximation $\bma = \frac{1}{M}\sum_{m=1}^M h^m(\vx)$, where $M$ denotes the number of ensemble members or samples drawn from the posterior in the case of e.g.\  variational inference.

\emph{Uncertainty Quantification.} Given a second-order distribution $Q \in \ksimplextwo$, the task of uncertainty quantification is to specify functionals (namely, uncertainty measures) $\TU, \AU, \EU: \ksimplextwo \rightarrow \mathbb{R}_{\geq 0}$ that quantify total, aleatoric, and epistemic uncertainty, respectively. Particularly, a well-known decomposition of proper scoring rules yields a theoretically principled family of uncertainty measures, flexibly instantiated by the choice of the loss function. 

Proper scoring rules, originating in Savage’s elicitation framework \citep{savage1971elicitation} and developed further by \citet{gneiting2007strictly}, assign numerical scores to probabilistic forecasts and incentivize \emph{truthful} reporting. A scoring rule is proper if a forecaster’s expected score is optimized exactly when the announced distribution equals their true belief, and strictly proper if this optimizer is unique. Further, a function $\cY \rightarrow \overline{\mathbb{R}}$ is called $\Delta_K$-quasi-integrable if it is measurable with respect to $2^{\mathcal{Y}}$, and is quasi-integrable with respect to all $\vtheta\in\Delta_K$. We assume scoring rules to be \emph{negatively} oriented, thus taking a standard machine learning perspective where we wish to minimize the corresponding loss. 

\begin{definition*}[Proper scoring rule]\label{def:psr}
A scoring rule is a measurable function $\ell:\Delta_K\times\cY\to\overline{\mathbb R}$ such that, for all $\hat{\vtheta}\in\Delta_K$, the expectation
\begin{equation}\label{eq:eps}
L_\ell(\hat{\vtheta},\vtheta) \defeq \mathbb{E}_{Y\sim\vtheta}\left[\ell(\hat{\vtheta},Y)\right]
\end{equation}
is well defined for every $\vtheta\in\Delta_K$. The scoring rule $\ell$ is \emph{proper} if, for all $\hat{\vtheta},\vtheta\in\Delta_K$,
\begin{equation}\label{eq:psr}
L_\ell(\vtheta,\vtheta)\ \le\ L_\ell(\hat{\vtheta},\vtheta),
\end{equation}
and \emph{strictly proper} if equality in \eqref{eq:psr} holds only when $\hat{\vtheta}=\vtheta$.
\end{definition*}

It is well-known that (strictly) proper scoring rules (and with them their corresponding expected losses) can be decomposed into a \textit{divergence} term and an \textit{entropy} term, respectively \citep{gneiting2007strictly, kullNovel2015}: 
\begin{align*}
D_\ell(\hat{\vtheta}, \vtheta)  = L_\ell(\hat{\vtheta}, \vtheta) - L_\ell(\vtheta, \vtheta), \quad  
H_\ell(\vtheta)  = L_\ell(\vtheta, \vtheta).
\end{align*}

The latter captures the expected loss that materializes even when the ground truth $\vtheta$ is predicted, whereas the former represents the ``excess loss'' that is caused by predicting $\hat{\vtheta}$ and hence deviating from the optimal prediction $\vtheta$. This decomposition naturally aligns with the distinction between \emph{irreducible} (aleatoric) and \emph{reducible} (epistemic) uncertainty: $H_\ell(\vtheta)$ is the irreducible part of the risk, and hence relates to aleatoric uncertainty, whereas $D_\ell(\hat{\vtheta}, \vtheta)$ is purely due to the learner's imperfect knowledge, or epistemic state, and could in principle be reduced by additional information. 

So far, our discussion has focused on first-order distributions, assuming access to the true conditional distribution $\vtheta$. In practice, however, and as previously motivated, uncertainty about $\vtheta$ is represented through a second-order distribution $Q$. Consequently, it is sensible to define 
\begin{align}
    \on{EU}(Q) & = \mathbb{E}_{\vtheta \sim Q}[D_\ell(\bar{\vtheta}, \vtheta)]  \label{eq:eunft} \\[0.1cm]
    & = \mathbb{E}_{\vtheta \sim Q}[L_\ell(\bar{\vtheta}, \vtheta) - L_\ell(\vtheta, \vtheta)] \\[0.1cm]
    & = \underbrace{\mathbb{E}_{\vtheta \sim Q}[L_\ell(\bar{\vtheta}, \vtheta) ]}_{\text{TU}(Q)} -  
    \underbrace{\mathbb{E}_{\vtheta \sim Q} [ L_\ell(\vtheta, \vtheta)]}_{\text{AU}(Q)}. \label{eq:tuau}
\end{align}
That is, EU is the \emph{gain}\,---\,in terms of loss reduction\,---\,the learner can expect when predicting, not on the basis of the uncertain knowledge $Q$, but only after being revealed the true $\vtheta$. Intuitively, this is plausible: The more uncertain the learner is about the true $\vtheta$ (i.e., the more dispersed $Q$), the more it can gain by getting to know this distribution. The connection to proper scoring rules is also quite obvious: Total uncertainty in (\ref{eq:tuau}) is the expected loss of the learner when predicting optimally ($\bar{\vtheta}$) on the basis of its uncertain belief $Q$. It corresponds to the expectation (with regard to $Q$) of the expected loss (\ref{eq:eps}). Broadly speaking, we average the score of the prediction $\bar{\vtheta}$ over the potential ground-truths $\vtheta \sim Q$. Aleatoric uncertainty is the expected loss that remains, even when the learner is perfectly informed about the ground-truth $\vtheta$ before predicting. Again, we average over the potential ground-truths $\vtheta \sim Q$. Epistemic uncertainty is the difference between the two, i.e., the expected loss reduction due to information about $\vtheta$. When $\ell$ is taken to be a strictly proper scoring rule, \eqref{eq:eunft} is also known as the Bregman information \citep{banerjeeOptimal2004}. We discuss three important loss-instantiations of the uncertainty measures: 

\begin{enumerate}[leftmargin=0.6cm]
\item[(1)] \emph{Log loss:} Instantiating the uncertainty measures \eqref{eq:tuau} with the log loss $\ell(\hvtheta,y) = -\log(\htheta_y)$ yields 
\begin{align*}
 \underbrace{\rm{S}(\bma)}_{\TU(Q)} &= \underbrace{\mathbb{E}_{\vtheta \sim Q}[\rm{S}(\vtheta)]}_{\AU(Q)} + \underbrace{\bE^{}_{\vtheta \sim Q}[\rm{KL}(\vtheta \parallel \bma)]}_{\EU(Q)},
\end{align*}
where $\rm{S}(\cdot)$ and $\rm{KL}(\cdot \parallel \cdot)$ denote the Shannon entropy and Kullback-Leibler divergence, respectively. Clearly, the log loss instantiations correspond to the entropy-based measures \citep{depeweg2018decomposition}. Although these measures have been criticized \citep{wimmer2023quantifying}, they remain the most commonly used uncertainty measures in the classification setting.
\item[(2)] \emph{Brier loss} (or quadratic loss): Similarly, fixing $\ell(\hvtheta,y) = \sum_{k=1}^K(\htheta_k - \llbracket k = y\rrbracket)^2$ yields 
\begin{align*}
\underbrace{1 \! - \!\! \sumK\bar{\theta}_k^2}_{\TU(Q)} = \underbrace{\mathbb{E}_{\vtheta \sim Q} \Big[ 1 \!- \!\! \sumK\theta^2_k \Big]}_{\AU(Q)} + \underbrace{\bE^{}_{\vtheta \sim Q}\sumK(\bar{\theta}_k - \theta_k)^2}_{\EU(Q)}
\end{align*}
The Brier loss \citep{brierVerificationOf1950} is another strictly proper scoring rule, which is often used as a measure of calibration \citep{mindererRevisitingCalibration2021, clarteExpectationConsistency2023}. The decomposition generates the measures of expected Gini impurity for aleatoric uncertainty. The Gini impurity quantifies the probability of misclassification when predicting randomly according to the ground-truth distribution, i.e. $\hvtheta = \vtheta$. The measure of epistemic uncertainty is the expected squared difference. This measure has also been proposed by \citet{smithUnderstandingMeasures2018}.
\item[(3)] \emph{Zero-one loss:} With  $\ell(\hvtheta,y) = 1-\llbracket \argmax_k \htheta_k = y\rrbracket$, we get the following instantiations:
\begin{align*}
\underbrace{1 - \max_k \bar{\theta}_k}_{\TU(Q)} &= \underbrace{\bE_{\vtheta \sim Q}[1 - \max_k\theta_k]}_{\AU(Q)} \\ &+ \underbrace{\bE_{\vtheta \sim Q}[\max_k \theta_k - \theta_{\argmax_k\bar{\theta}_k}]}_{\EU(Q)}. 
\end{align*}
The aleatoric component is the expected complement of the confidence. Assuming $\hvtheta = \vtheta$, it quantifies the probability of misclassification when predicting the class with the highest probability. Quantifying uncertainty on the basis of the confidence of the model is common \citep{hendrycksABaseline2017}, but in a second-order representation, this measure has not been used before. Interestingly, this component aligns with the measure of aleatoric uncertainty proposed and axiomatically justified by \citet{sale2024second}. The epistemic component of this decomposition has, to the best of our knowledge, not been used in machine learning so far. It is minimized when all first-order distributions $\vtheta$ in the support of the second-order distribution $Q$ have the same argmax as the Bayesian model average $\bma$, where the support is defined as $\operatorname{supp}(Q) = \{\vtheta \in \Delta_K : Q(\vtheta) > 0\}$. 
Minimizing this component guarantees that every first-order distribution in the support of $Q$ agrees on the class with highest probability (i.e., consensus on the most likely class), but it does not eliminate all epistemic uncertainty about the true conditional distribution $\vtheta$. In particular, variability in the assigned probability mass, such as how confident the predictors are about that top class or how much weight is placed on runner-up classes, can persist. Put differently, the zero-one loss epistemic measure only captures label-level disagreement about which class is most likely and is blind to finer-grained uncertainty in the shape of the distributions; even when it is zero, $Q$ may still be diffuse and reflect unresolved uncertainty about $\vtheta$ beyond the predicted label.
\end{enumerate}

\emph{Remark.} Let $\ell: \ksimplex \times \cY \rightarrow \overline{\mathbb{R}}$ be a strictly proper scoring rule with associated convex potential $G: \ksimplex \rightarrow \mathbb{R}$ such that $D_\ell(\hat{\vtheta},\vtheta)=L_\ell(\hat{\vtheta},\vtheta)-L_\ell(\vtheta,\vtheta)$
is the \emph{Bregman divergence} induced by $G$. For a second-order distribution $Q \in \ksimplextwo$, let $\bar{\vtheta} \defeq \mathbb{E}_{\vtheta\sim Q}[\vtheta]$ denote the (Bayesian) model average. Then, the measure of epistemic uncertainty \eqref{eq:eunft} is exactly a Jensen gap of the convex potential $G$, which measures how spread out the posterior over first-order predictions is:
\begin{align*}
\EU(Q) = \mathbb{E}_{\vtheta\sim Q} \left[G(\vtheta)\right]  -  G \left(\bar{\vtheta}\right).
\end{align*}
This is easy to see: By the convex-potential representation of strictly proper scoring rules \citep{gneiting2007strictly}, $D_\ell(\hat{\vtheta},\vtheta) = G(\vtheta)-G(\hat{\vtheta})-\langle\nabla G(\hat{\vtheta}),\,\vtheta-\hat{\vtheta}\rangle$. With the prediction $\hat{\vtheta}=\bar{\vtheta}$ and taking the expectation over $Q$,
\begin{align*}
    \EU(Q) &= \mathbb{E}[D_{\ell}(\vtheta,\bar{\vtheta})] \\
     &= \mathbb{E}[G(\vtheta)]-G(\bar{\vtheta})-\big\langle\nabla G(\bar{\vtheta}),\,\mathbb{E}[\vtheta-\bar{\vtheta}]\big\rangle \\ 
     &= \mathbb{E}[G(\vtheta)]-G(\bar{\vtheta}),
\end{align*}
since $\mathbb{E}[\vtheta-\bar{\vtheta}]=0$. Thus, $\EU(Q)=0$ iff $Q$ is a point mass at $\bar{\vtheta}$; more generally, $\EU$ is monotone in the convex order, if $Q’$ is more dispersed than $Q$ but has the same mean, then $\EU(Q’) \ge \EU(Q)$. This characterization is epistemically appealing: $\EU$ grows precisely with posterior dispersion about the unknown ground‐truth distribution and vanishes only when the learner’s belief collapses to a single $\vtheta$, i.e., when no epistemic uncertainty remains.

%% Figure 1
\begin{figure*}[t!]
  \centering
  \includegraphics[width=.95\linewidth]{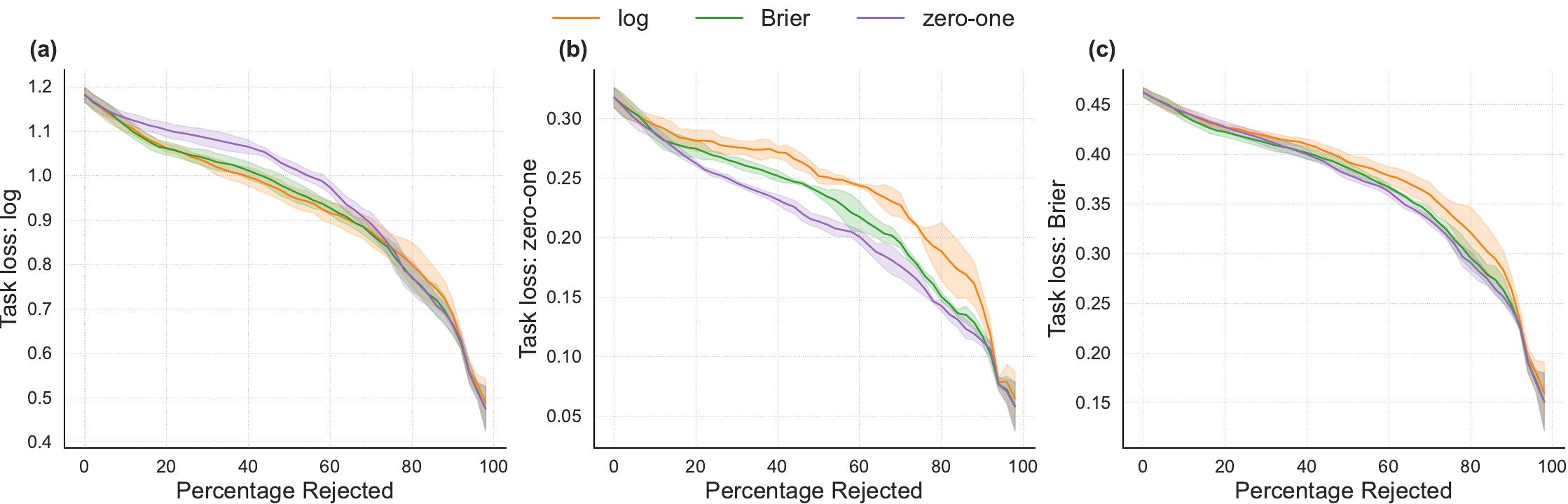}
\caption{Selective prediction with different task losses using the total uncertainty component as the rejection criterion where \textbf{(a)} uses the \emph{log loss} as task loss, \textbf{(b)} \emph{zero-one loss}, and \textbf{(c)} the \emph{Brier loss}, respectively. \emph{Results are averaged over three runs.}}
\label{fig:selective-prediction}
\end{figure*}

\section{Customized Uncertainty Quantification}\label{sec:custom}
Throughout, we write $\mathcal{L}(\Delta_K, \mathcal{Y})$ to denote the collection of all proper scoring rules $\ell:\Delta_K\times\cY\to\overline{\mathbb R}$. Moreover, let $U_{\ell}$ denote a mapping $\ksimplextwo \longrightarrow \mathbb{R}_{\geq 0}$, where $\ell \in \mathcal{L}(\Delta_K, \mathcal{Y})$. 

\emph{Task Loss vs.\ Uncertainty Loss.} In real‐world settings, we do not measure uncertainty purely for its own sake, but to understand to what extent it informs and improves performance on the downstream task.
In typical machine learning tasks, performance is evaluated on held‐out test data $\mathcal{D}_{\mathrm{test}}$ using a loss function $\ell_{\mathrm{task}}$, which we call the \emph{task loss}. To distinguish it from the scoring rule $\ell$ that parametrizes the uncertainty measure $U_{\ell}$, we refer to that rule as the \emph{uncertainty loss}. The task loss may have the same structure as the uncertainty loss, i.e., $\ell_{\rm{task}} : \Delta_K \times \mathcal{Y} \to \mathbb{R}_{\geq 0}$ quantifies the cost associated with predicting the probability distribution $\hat{\vtheta} \in \Delta_K$ when the true outcome is $y \in \mathcal{Y}$, and the overall loss is the average over the predictions on $\mathcal{D}_{\rm{test}}$. In general, however, $\ell_{\rm{task}}$ can be a complex loss function that is neither defined in an instance-wise manner nor decomposable over the data points in $\mathcal{D}_{\rm{test}}$. In selective prediction, for example, the performance is determined by the \emph{ordering} of the data points (according to their uncertainty). Thus, a loss cannot be assigned to an individual data point anymore. Instead, the uncertainty score assigned to a data point can only be assessed in comparison to others. We say a scoring rule $\ell$ is better aligned with the task loss $\ell_{\rm task}$ than another rule $\ell^{\prime}$ if using its induced uncertainty measure $U_{\ell}$ yields a strictly lower (expected) task loss than using $U_{\ell^{\prime}}$.

%% Table 1
\begin{table*}[t!]
\centering
\begin{tabularx}{\textwidth}{XX>{\centering\arraybackslash}X>{\centering\arraybackslash}X>{\centering\arraybackslash}X}
\toprule
Dataset                      & Method & log loss & Brier loss & zero-one loss \\ \midrule
\multirow{3}{*}{\dataset{CIFAR-100}} & Dropout & $ \textbf{0.829} \scriptstyle{\pm 0.000} $ & $ 0.822 \scriptstyle{\pm 0.000} $ & $ 0.702 \scriptstyle{\pm 0.001} $\\
                            & Ensemble & $ \textbf{0.860} \scriptstyle{\pm 0.001} $ & $ 0.852 \scriptstyle{\pm 0.002} $ & $ 0.762 \scriptstyle{\pm 0.002} $ \\
                            & Laplace & $ \textbf{0.845} \scriptstyle{\pm 0.002} $ & $ 0.836 \scriptstyle{\pm 0.001} $ & $ 0.808 \scriptstyle{\pm 0.001} $ \\\midrule
\multirow{3}{*}{\dataset{Places365}}  & Dropout     & $ \textbf{0.837} \scriptstyle{\pm 0.001} $ & $ 0.828 \scriptstyle{\pm 0.001} $ & $ 0.714 \scriptstyle{\pm 0.002} $ \\
                            & Ensemble & $ \textbf{0.856} \scriptstyle{\pm 0.002} $ & $ 0.846 \scriptstyle{\pm 0.002} $ & $ 0.758 \scriptstyle{\pm 0.005} $ \\
                            & Laplace & $ \textbf{0.863} \scriptstyle{\pm 0.003} $ & $ 0.850 \scriptstyle{\pm 0.003} $ & $ 0.825 \scriptstyle{\pm 0.004} $ \\\midrule
\multirow{3}{*}{\dataset{SVHN}}       & Dropout     & $ \textbf{0.835} \scriptstyle{\pm 0.000} $ & $ 0.830 \scriptstyle{\pm 0.000} $ & $ 0.701 \scriptstyle{\pm 0.002} $ \\
                            & Ensemble & $ \textbf{0.872} \scriptstyle{\pm 0.005} $ & $ 0.868 \scriptstyle{\pm 0.005} $ & $ 0.776 \scriptstyle{\pm 0.007} $ \\
                            & Laplace & $ \textbf{0.865} \scriptstyle{\pm 0.005} $ & $ 0.856 \scriptstyle{\pm 0.004} $ & $ 0.826 \scriptstyle{\pm 0.006} $ \\ \bottomrule 
\end{tabularx}
\caption{OoD detection with \dataset{CIFAR-10} as in-Distribution data based on epistemic uncertainty. \emph{The mean and standard deviation of the AUROC over three runs are reported.} Best results are highlighted in \textbf{bold}.} 
\label{tab:ood}
\end{table*}

\subsection{Selective Prediction}\label{subsec:selective}
Selective prediction is a task where the model can abstain from making a prediction on some inputs if it is uncertain about the correct outcome. 
Usually, performance on this task is measured using a hold-out dataset, for example, a test set.
Formally, let the test set be denoted as $\cD_{\rm{test}} = \{(\vx_i, y_i)\}_{i = 1}^n$, where for each instance $\vec{x}_i$, a predictive model outputs a second-order distribution  $Q_i \in \ksimplextwo$.
Further, for $\alpha \in [0,1]$ let $k =  \floor{\alpha n}$ be a (fixed) rejection level, which dictates the number of instances for which the model is allowed to abstain from making a prediction. 
The permutation $\pi$ of $\{1,\dots,n\}$ is defined so that
\begin{align*}
    U_{\ell}(Q_{\pi(1)}) \geq  U_{\ell}(Q_{\pi(2)}) \geq \dots \geq U_{\ell}(Q_{\pi(n)}).
\end{align*}
In other words, the permutation $\pi$ sorts instances by their uncertainty, as quantified by the measure $U_{\ell}$. Again, let $\ell \in \mathcal{L}(\Delta_K, \mathcal{Y})$ be a loss function. Then, the area under the loss-rejection curve (AULC) is defined as  
\begin{align}\label{def:aulc}
    \rm{AULC} = \int_{0}^{1} \left( \frac{1}{\floor{\alpha n}} \sum_{j = 1}^{\floor{\alpha n}} \ell^*(\hat{\vtheta}_{\pi(j)}, y_{\pi(j)}) \right) \, d\alpha.
\end{align}
Taking the expectation (over the randomness in the labels) in \eqref{def:aulc} yields the \emph{expected} AULC.
In the context of selective prediction, AULC can be interpreted as the task loss $\ell_{\rm{task}}$, as it quantifies the expected prediction error over varying levels of instance rejection. We call $\ell^{\star}$ in \eqref{def:aulc} \emph{auxiliary} task loss.

\begin{proposition}\label{prop:arc-tu}
Let $\hat{\vtheta} \in \Delta_K$ be a (first-order) prediction and $\ell \in \mathcal{L}(\Delta_K, \mathcal{Y})$. Then, the expected AULC is minimized by ordering test instances in non-decreasing order of their (instance-wise) expected loss $\mathbb{E}_{y \sim \theta}\bigl[\ell(\hat{\vtheta},y)\bigr]$.
\end{proposition}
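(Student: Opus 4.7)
The plan is to reduce the expected AULC to a linear combination of instance-wise expected losses with known, monotone coefficients, and then invoke the rearrangement inequality to pin down the minimizing order. To begin, applying Fubini's theorem (the integrand is non-negative) and the linearity of expectation, I would pull the expectation inside the integral and the sum, so that
\begin{align*}
\mathbb{E}[\mathrm{AULC}] = \int_{0}^{1}\frac{1}{\lfloor\alpha n\rfloor}\sum_{j=1}^{\lfloor\alpha n\rfloor} L_{\pi(j)} \, d\alpha,
\end{align*}
where $L_i \defeq \mathbb{E}_{y\sim\vtheta_i}[\ell(\hvtheta_i,y)]$ is the instance-wise expected task loss, a deterministic function of $\vx_i$. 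All label-level randomness is now absorbed into the constants $L_i$, and the permutation $\pi$ is the only free parameter to be optimized over.

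Next, swapping the sum and the integral yields $\mathbb{E}[\mathrm{AULC}] = \sum_{j=1}^{n} c_j L_{\pi(j)}$, where $c_j$ collects those $\alpha$ for which position $j$ is included in the inner partial sum, namely $\alpha\in[j/n,1]$. Using that $\lfloor\alpha n\rfloor=k$ on $[k/n,(k+1)/n)$, a direct evaluation gives $c_j = \tfrac{1}{n}\sum_{k=j}^{n-1}\tfrac{1}{k}$, which is strictly positive for $j<n$, strictly decreasing in $j$, and $c_n=0$. The rearrangement inequality applied to $\sum_j c_j L_{\pi(j)}$ with fixed non-increasing coefficients $(c_j)$ then immediately yields that the sum is minimized precisely when the rearranged sequence $(L_{\pi(j)})_{j=1}^{n}$ is non-decreasing in $j$, which is exactly the claim.

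The principal obstacle is merely a technical one: on $[0,1/n)$ the floor $\lfloor\alpha n\rfloor$ equals zero, so the integrand of the AULC is formally ``empty sum divided by zero''. I would resolve this by adopting the natural convention that an average over no retained instances equals zero (equivalently, starting the integral at $1/n$), after which Fubini and the sum/integral exchange are routine. Once this is settled, the entire combinatorial content of the proposition is carried by the monotonicity of the harmonic-like weights $(c_j)$ and a one-line invocation of the rearrangement inequality; the strictness of the monotonicity additionally pins down the optimal order uniquely up to ties in the $L_i$.
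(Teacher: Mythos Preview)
Your proposal is correct and follows the same strategy as the paper: reduce the expected AULC to a weighted sum $\sum_j c_j L_{\pi(j)}$ with monotone-decreasing weights and then invoke the rearrangement inequality. Your treatment is in fact slightly more rigorous than the paper's: the paper replaces the integral by a Riemann sum (writing $\approx$ rather than $=$) to obtain weights $w_j=\sum_{k=j}^{n}\tfrac{1}{k}$, whereas you evaluate the integral exactly using the piecewise-constant structure of $\lfloor\alpha n\rfloor$ to get $c_j=\tfrac{1}{n}\sum_{k=j}^{n-1}\tfrac{1}{k}$, and you also explicitly address the $\alpha\in[0,1/n)$ edge case that the paper leaves implicit.
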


\begin{proof}
    For $\alpha \in [0,1]$, the area under the loss-rejection curve (AULC) is defined as  
\begin{align*}
    \rm{AULC} = \int_{0}^{1} \left( \frac{1}{\floor{\alpha n}} \sum_{j = 1}^{\floor{\alpha n}} \ell(\hat{\theta}_{\pi(j)}, y_{\pi(j)}) \right) \, d\alpha.
\end{align*}
Define $c_{\pi(j)} \defeq \mathbb{E}\bigl[\ell(\hat{\theta}_{\pi(j)},y_{\pi(j)})\bigr] $. Then, the \emph{expected} area under the loss-rejection curve is given by
\begin{align}
\mathbb{E}[\mathrm{AULC}]
&=\int_{0}^{1} \left( \frac{1}{\lfloor \alpha n \rfloor} \sum_{j=1}^{\lfloor \alpha n \rfloor} \mathbb{E}\bigl[\ell(\hat{\theta}_{\pi(j)},y_{\pi(j)})\bigr] \right) d\alpha \nonumber \\&=\int_{0}^{1} \left( \frac{1}{\lfloor \alpha n \rfloor} \sum_{j=1}^{\lfloor \alpha n \rfloor} c_{\pi(j)} \right) d\alpha.
\end{align}
Consequently, we can approximate the integral by a Riemann sum with step $\Delta \alpha=\frac{1}{n}$:
\begin{align*}
\mathbb{E}[\mathrm{AULC}]
\approx \frac{1}{n} \underbrace{\sum_{k=1}^{n}\left( \frac{1}{k} \sum_{j=1}^{k} c_{\pi(j)} \right)}_{\eqqcolon S({\pi}) }.
\end{align*}
Then, interchanging the order of summation yields
\begin{align*}
S(\pi) =\sum_{k=1}^{n}\sum_{j=1}^{k}\frac{1}{k}\,c_{\pi(j)} =\sum_{j=1}^{n}c_{\pi(j)}\sum_{k=j}^{n}\frac{1}{k}.
\end{align*}
With weights $w_j=\sum_{k=j}^{n}\frac{1}{k}$ we finally get 
$ S(\pi)=\sum_{j=1}^{n}w_j\,c_{\pi(j)}$. Since $w_1\ge w_2\ge\cdots\ge w_n>0$, the rearrangement inequality implies that the sum $\sum_{j=1}^{n}w_j\,c_{\pi(j)}$ is minimized when $ c_{\pi(1)}\le c_{\pi(2)}\le\cdots\le c_{\pi(n)}$. 
\end{proof}

Now, if $\hat{\vtheta} = \bar{\vtheta}$, then considering the expectation (with respect to the learner's belief $Q$) over $\mathbb{E}_{y \sim \vtheta}\bigl[\ell(\bar{\vtheta},y)\bigr]$ yields the measure of total uncertainty in \eqref{eq:tuau}. This leads to an important observation: In selective prediction, when determining the ordering of test instances (e.g., based on uncertainty measures), the most sensible strategy to minimize the expected AULC, as established in \cref{prop:arc-tu}, is to order them according to the (predicted) \emph{total} uncertainty with uncertainty loss $\ell$ given by $\ell^*$ in (\ref{def:aulc}).

As an aside, let us note that loss-rejection curves (or, analogously, accuracy-rejection curves) are commonly used as a means to evaluate aleatoric and epistemic uncertainty measures, too, which means the curves are constructed for these measures as selection criteria \citep{eyke_new, saleLabelWise2024}. In light of our finding that total uncertainty is actually the right criterion, this practice may appear somewhat questionable, as it means that aleatoric and epistemic uncertainty measures are evaluated on a task they are actually not tailored to. 

\emph{Empirical Results.} We generate loss-rejection-curves by rejecting the predictions for instances on which the predictor is most uncertain and computing the loss on the remaining subset \citep{huhn2008fr3}. Given a good uncertainty quantification method, the loss should monotonically decrease with the percentage of rejected instances, because the model misclassifies instances with low uncertainty less often than instances with high uncertainty. 

In particular, we train a \texttt{RandomForest} classifier, an ensemble of decision trees, on the \dataset{CoverType} dataset \citep{blackardCoverType1998}. In \cref{fig:selective-prediction} we show the results for three different task losses using total uncertainty as the rejection criterion. This validates the theory: the rejection ordering is optimal when the uncertainty loss is aligned with the task loss. The effect is most pronounced with zero-one loss, the canonical loss for selective prediction \citep{nadeemARCs2009, geifmanSelectiveClassification2017}. Experimental details and additional experiments with varying uncertainty components, models, and datasets are deferred to the supplementary material. 
The code is at: \url{https://github.com/pwhofman/task-specific-uncertainty}.

Finally, two practical caveats are worth noting. The optimal ordering in \cref{prop:arc-tu} presumes that the uncertainty values faithfully reflect the instance-wise expected loss; poor posterior approximations will blur that ranking and weaken the gains, so investing in better second-order beliefs (e.g., via ensembling etc.) improves selective prediction in practice. Conversely, using only aleatoric or only epistemic components as selection criterion, rather than total uncertainty aligned with the task loss, can be misleading.

\subsection{Out-of-Distribution Detection}

%% Figure 2
\begin{figure*}[t!]
  \centering
  \includegraphics[width=.95\linewidth]{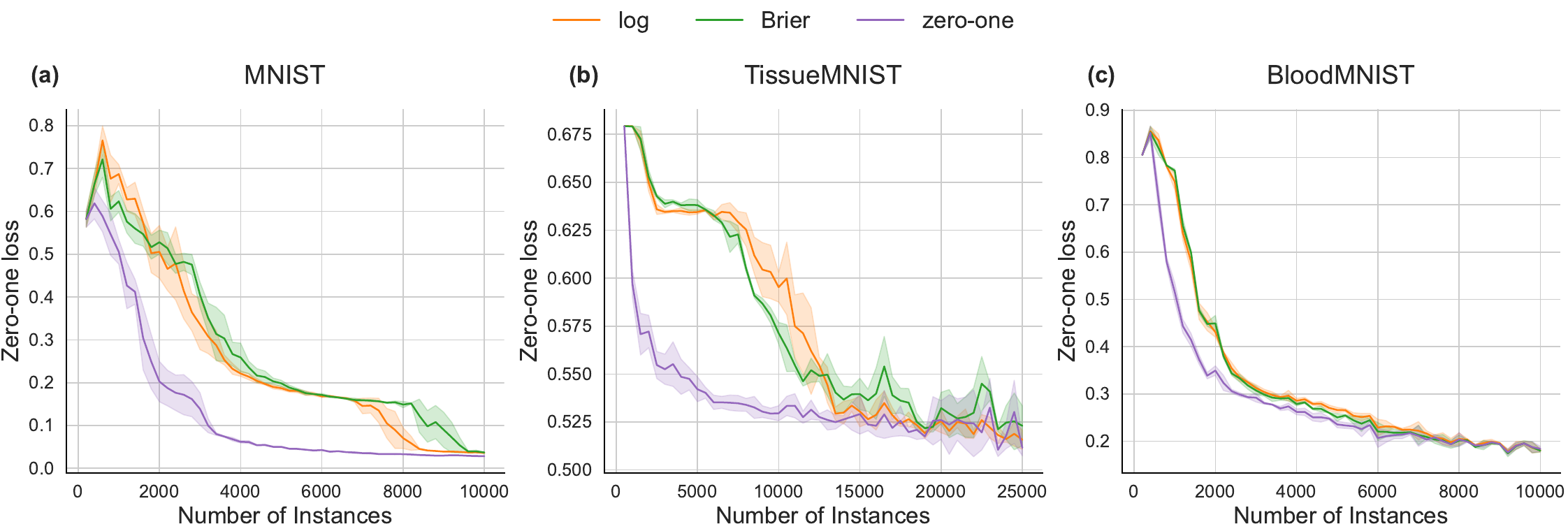}
\caption{Active learning with different datasets using the epistemic uncertainty component to query new instances, where \textbf{(a)} is based on the \dataset{MNIST} dataset, \textbf{(b)} \dataset{TissueMNIST}, and \textbf{(c)} \dataset{BloodMNIST}, respectively. \emph{Results are averaged over three runs.}}
\label{fig:active-learning}
% \vspace{-0.5cm}
\end{figure*}

Another complementary downstream task to assess and compare the quality of uncertainty quantification is out-of-distribution (OoD) detection.
We first train the model on in-distribution (iD) data and evaluate its uncertainty on held-out iD test instances. Then, we present the model with OoD samples and compute their uncertainties as well. Because the model has not seen the OoD domain during training, it should register \emph{higher} epistemic uncertainty on those inputs. Being able to separate iD from OoD points is essential for reliability, since predictions outside the training distribution are inherently less trustworthy.

\emph{Empirical Results.} We train a \texttt{ResNet18} \citep{heDeepResidual2016} on \dataset{CIFAR-10} and approximate the second-order predictive distribution using three methods: deep ensembles \citep{lakshminarayananDeep2017}, Monte Carlo Dropout \citep{galDropout2016}, and a Laplace approximation around the trained parameters \citep{daxbergerLaplace2021}. Table \ref{tab:ood} reports epistemic uncertainty performance on OoD datasets with \dataset{CIFAR-10} as the in-distribution data, comparing three loss-based instantiations of the epistemic uncertainty measure \eqref{eq:eunft}. The results show that epistemic uncertainty measures instantiated with the log loss (i.e., mutual information) achieves the best OoD performance, which may be explained by the log loss penalizing over-confident predictions, thus improving the separation of iD and OoD samples.
This aligns with, and helps justify, its widespread use in second-order uncertainty representations for OoD detection \citep{muscanyiBenchmarkingUncertainties2024}. On the contrary, the zero-one loss ignores the confidence magnitude. Epistemic uncertainty is zero when the argmaxes ``agree'', yielding a weaker separation signal.
Additional results using \dataset{ImageNet} \citep{dengImageNet2009} and \dataset{Food101} \citep{bossardFood2014} as iD data are reported in the supplementary material; they further underscore the superior performance of the log-loss–based instantiation of \eqref{eq:eunft}. 

We note that OoD detection is qualitatively different from selective prediction or active learning because the “task’’ itself is shaped by how the OoD examples are constructed. Covariate shifts, semantic shifts, near- versus far-OoD, and synthetic versus natural perturbations all induce different separability structures between in- and out-of-distribution data, so any ranking-based metric, such as the AUROC used here, conflates properties of the uncertainty measure with the particular flavor of shift being evaluated. That mutual information (the log loss instantiated epistemic uncertainty measure) performs best in our experiments indicates it is especially sensitive to the kinds of unfamiliarity present in these benchmarks; however, this performance should not be interpreted as universally dominant without qualification. For example, \citet{li2025out} critically re-examine common OoD detection pipelines and argue that many of them are effectively asking the wrong questions, conflating surrogate signals with the underlying notion of ``out-of-distributionness''.

This finding, too, echoes the paper’s central message: the “best” uncertainty measure depends on the downstream task (there is \emph{no} one size fits all).  In selective prediction, aligning the uncertainty loss with the task loss yields optimal behavior, whereas in the OoD detection benchmarks, the log loss instantiation of epistemic uncertainty (mutual information) empirically separates familiar from unfamiliar inputs most reliably, albeit with the caveat that its dominance can depend on the specific distribution shift and evaluation setup.

\subsection{Active Learning}

Active learning is another popular downstream task that is frequently used to evaluate uncertainties, since its success hinges on identifying examples about which the model is most uncertain in an epistemic sense and thus will benefit most from labeling \citep{mpub443}. Its objective is to reach strong performance with as few labels as possible. Beginning from a (small) labeled seed set, the learner iteratively selects unlabeled examples to be annotated by an oracle. Many query strategies leverage epistemic uncertainty \citep{nguyenEpistemic2019, kirschBatch2019, margrafALPBench2024}. The source of (epistemic) uncertainty we care about in this setting is label disagreement: whether plausible predictive distributions disagree on the most likely class. Accordingly, the uncertainty measure should directly reflect that disagreement. Many common measures instead conflate label-level ambiguity with other sources of epistemic uncertainty, for example, uncertainty about the full first-order distribution, even when all plausible predictors agree on the top label. That extra sensitivity can cause less effective queries, whereas the zero-one loss instantiation isolates true disagreement on the predicted label. Here, we run pool-based active learning: in each round, we score candidates using different instantiations of the epistemic uncertainty measure \eqref{eq:eunft} and query the highest-uncertainty examples.

\emph{Empirical Results.} We use \dataset{MNIST} \citep{leCunGradientBased1998}, \dataset{FashionMNIST} \citep{xiao2017fashion}, and multiclass subsets of the \dataset{MedMNIST} collection \citep{yangMedmnistv22023}. The benchmark includes both color and grayscale tasks; for color inputs we employ a small convolutional architecture based on the \texttt{LeNet} \citep{leCunGradientBased1998} architecture, and for grayscale data a small fully connected network. The second-order predictive distribution is approximated via Monte Carlo Dropout, as is standard in image-based active learning \citep{galDeepBayesian2017, kirschBatch2019}. \cref{fig:active-learning} shows the task loss (zero-one) versus the number of labeled examples. On all shown datasets, epistemic uncertainty sampling using the zero-one–loss instantiation delivers the best label efficiency. Also for active learning experiments, further datasets and ablations can be found in the supplementary material.

The zero-one loss instantiation performs well because it targets disagreement over the most likely label among plausible predictive distributions, i.e., the kind of label disagreement that active learning specifically seeks to resolve. The most informative unlabeled examples are those where the model is unsure about the correct label because different plausible (first-order) distributions disagree on which class is most likely; examples where all of them agree on a single top class add little new information. The zero-one instantiation of the epistemic uncertainty measure captures this disagreement: its value is zero when there is consensus on the predicted class and grows only when there is genuine uncertainty about which label should be chosen. In contrast, other instantiations (e.g., log or Brier) may signal uncertainty even though the predicted label would remain unchanged, since they respond to finer variations in the second-order distribution that do not affect the top choice, potentially resulting in less focused queries. Once more, we see that one size does \emph{not} fit all, active learning benefits most from the specific form of uncertainty captured by zero-one loss (disagreement on the predicted label) rather than a broad, undifferentiated uncertainty measure.

\subsection{Findings and Insights}
Taken together, our theoretical and empirical results form a coherent prescription: uncertainty quantification must be \emph{customized} to the downstream task, not treated as a one-size-fits-all solution. 
On selective prediction, aligning the uncertainty loss with the task loss yields the optimal rejection ordering; for OoD detection, the log loss instantiation (mutual information) best isolates unfamiliar inputs; and for active learning, resolving label-level disagreement via the zero-one epistemic measure drives the most label-efficient gains. Because these three tasks are among the most widely used evaluation paradigms in both research and practice, this paper serves as a reality check: without careful alignment, empirical comparisons can be misleading.

\section{Related Work}
\emph{Uncertainty Quantification.}
For second-order distributions, the most commonly used measures are based on information-theoretic decompositions of Shannon entropy \citep{depeweg2018decomposition}. 
However, these measures have been criticized for violating several properties that uncertainty measures should fulfill \citep{wimmer2023quantifying}. 
A generalization has been proposed, which considers different instantiations of the predicting model and approximations of the predictive first-order distribution \citep{schweighofer2025on}. 
Beyond information-theoretic approaches, there have also been proposals of uncertainty measures that are based on a decomposition of risk or loss. \citet{lahlouDirect2023} propose a method to directly quantify epistemic uncertainty based on the difference between total risk and Bayes risk. \citet{gruberUncertainty2023} use a general bias-variance decomposition to quantify uncertainty based on the Bregman information \citep{banerjeeOptimal2004} with mutual information as a concrete instantiation. Recently, \citet{hofman2024quantifying, hofman2024CredalApproach, schweighofer2025on, kotelevskii2025risk} have introduced an uncertainty quantification framework based on proper scoring rules. 

\emph{Connection to Downstream Tasks.}
A complimentary line of work starts from the downstream predictive task. 
\citet{smithRethinkingAleatoric2024} argue that one should reason about uncertainty by first specifying the predictive task at hand.
Other related work compares alternative representations of uncertainty and shows that the chosen representation can substantially affect task performance \citep{muscanyiBenchmarkingUncertainties2024, deJongHowDisentangled2024}. Likewise, several works observe that different uncertainty measures exhibit markedly different performance profiles across downstream tasks \citep{schweighofer2025on, kotelevskii2025risk}.
To the best of our knowledge, our contribution is the first to explicitly connect, in a single framework, downstream \emph{tasks} to specific uncertainty \emph{measures}, theoretically (via loss-alignment results) and empirically (across selective prediction, OoD detection, and active learning).

\section{Concluding Remarks}
We have argued and shown that uncertainty quantification is not a one-size-fits-all endeavor: the usefulness of a given uncertainty measure depends critically on the downstream task and how that task evaluates performance. On the theoretical side, we tied the construction of uncertainty measures to proper scoring rules and proved that optimal instance ordering in selective prediction arises when the uncertainty loss is aligned with the task loss. Empirically, we confirmed this principle across three canonical evaluation paradigms. For selective prediction, total uncertainty instantiated with the task-aligned loss yields the best rejection behavior; for out-of-distribution detection, the log loss based epistemic measure (mutual information) most reliably identifies unfamiliar inputs; and for active learning, querying based on the zero-one loss epistemic disagreement delivers the strongest label efficiency.
Beyond these specific findings, the broader implication is practical: researchers and practitioners  should choose and report uncertainty measures with their target objectives in mind. Blindly applying generic uncertainty scores or mixing selection and evaluation criteria can obscure real performance differences and lead to misleading conclusions.

\emph{Limitations and Future Work.} All of our downstream benefits hinge on reasonably faithful second-order beliefs; poor posterior approximations can degrade the expected gains, underscoring the value of improved uncertainty representations (e.g., better ensembles, or more expressive posterior approximations). It remains of great interest for both the machine learning community and practitioners to understand how to (empirically) \emph{evaluate} uncertainty itself, an inherently difficult problem, since true uncertainty is unobserved and must be judged indirectly via downstream objectives. Our results \emph{caution} against one-size-fits-all practices and instead point toward task-aligned evaluation protocols and benchmarks, alongside extending the framework beyond multiclass classification to regression, structured outputs, and cost-sensitive or imbalanced settings.

\section{Acknowledgments}
Yusuf Sale is supported by the DAAD program Konrad Zuse Schools of Excellence in Artificial Intelligence, sponsored by the Federal Ministry of Education and Research.

% \noindent Thank you for reading these instructions carefully. We look forward to receiving your electronic files! 

{\small\bibliography{references}}

% \input{aaai2026/ReproducibilityChecklist}

% Check whether the conference requires a reproducibility checklist to be included in the paper.
% If so, you can uncomment the following line and ajust the path to include it.
% \input{../../ReproducibilityChecklist/LaTeX/ReproducibilityChecklist.tex}

%%%%%%%%%%%%%%%%%%%%%%APPENDIX%%%%%%%%%%%%%%%%%%%

\appendix
\onecolumn
\setcounter{secnumdepth}{2}
\renewcommand\thesection{\Alph{section}}
\renewcommand\thesubsection{\thesection.\arabic{subsection}}

\section{Derivations}\label{app:derivations}
In this section, we derive measures of total, aleatoric, and epistemic uncertainty, instantiated for the log, Brier, and zero-one losses, respectively. We begin with the following definitions:
\begin{align}
    \TU(Q) &= \bE_{\vtheta \sim Q}\big[L_\ell(\bma, \vtheta)\big] = \bE_{\vtheta \sim Q}\Big[\bE_{y \sim \vtheta}\big[\ell(\bma, y)\big]\Big] \label{eq:tu-app}, \\[0.2cm]
    \AU(Q) &= \bE_{\vtheta \sim Q}\big[H_\ell(\vtheta)\big] = \bE_{\vtheta \sim Q}\big[L_\ell(\vtheta, \vtheta)\big] = \bE_{\vtheta \sim Q}\Big[\bE_{y \sim \vtheta}\big[\ell(\vtheta, y)\big]\Big] \label{eq:au-app}, \\[0.2cm]
    \EU(Q) &= \bE_{\vtheta \sim Q}\big[D_\ell(\bma, \vtheta)\big] = \bE_{\vtheta \sim Q}\big[L_\ell(\bma, \vtheta) - L_\ell(\vtheta, \vtheta)\big] = \bE_{\vtheta \sim Q}\Big[\bE_{y \sim \vtheta}\big[\ell(\bma, y) - \ell(\vtheta, y)\big]\Big].
    \label{eq:eu-app}
\end{align}
Given a second-order distribution $Q \in \ksimplextwo$, we predict with the Bayesian model average (BMA), i.e., $\hat{\vtheta} \coloneqq \bma = \mathbb{E}_{\vtheta\sim Q}[\vtheta]$.

\subsection{Log loss}
With $\ell(\bma, y) = -\log(\btheta_y)$,
\begin{align*}
    \TU(Q) &= \bE_{\vtheta \sim Q}\Big[\bE_{y \sim \vtheta}\big[-\log(\btheta_y)\big]\Big] \\
    &= \bE_{\vtheta \sim Q}\Big[\sumK \theta_k\big(-\log(\btheta_k)\big)\Big] \\
    &= \sumK \bar{\theta}_k\big(-\log(\bar{\theta}_k)\big) \\
    &= \operatorname{S}(\bma), \\ \\
    \AU(Q) &= \bE_{\vtheta \sim Q}\big[\bE_{y \sim \vtheta}[-\log(\theta_y)]\big] \\
    &= \bE_{\vtheta \sim Q}\big[\operatorname{S}(\vtheta)\big], \\ \\
    \EU(Q) &= \bE_{\vtheta \sim Q}\Big[\bE_{y \sim \vtheta}\big[-\log(\btheta_y) + \log(\theta_y)\big]\Big] \\
    &= \bE_{\vtheta \sim Q}\big[\KL(\vtheta \parallel \bma)\big].
\end{align*}

\subsection{Brier loss}
With $\ell(\bma,y) = \sumK\big(\bar{\theta}_k - \llbracket k = y\rrbracket\big)^2 = - 2 \bar{\theta}_y + \sumK \bar{\theta}_k^2 + 1$, 
\begin{align*}
    \TU(Q) &= \bE_{\vtheta \sim Q}\Big[\bE_{y \sim \vtheta}\big[-2\btheta_y + \sumK\btheta_k^2 + 1\big]\Big] \\
    &= \bE_{\vtheta \sim Q}\Big[\sumK\theta_k(-2\btheta_k)\Big] + \sumK\btheta_k^2 + 1 \\
    &= \sumK\big(\btheta_k(-2\btheta_k) + \btheta_k^2\big) + 1 \\
    &= \sumK(-2\btheta_k^2 + \btheta_k^2) + 1 \\
    &= 1 - \sumK\btheta_k^2, \\ \\
    \AU(Q) &= \bE_{\vtheta \sim Q}\Big[\bE_{y \sim \vtheta}\big[-2\theta_y + \sumK\theta_k^2 + 1\big]\Big] \\
    &= \bE_{\vtheta \sim Q}\Big[\sumK\big(\theta_k(-2\theta_k) + \theta_k^2\big) + 1\Big] \\
    &= \bE_{\vtheta \sim Q}\Big[\sumK(-2\theta_k^2 + \theta_k^2) + 1\Big] \\
    &= \bE_{\vtheta \sim Q}\Big[1 - \sumK\theta_k^2\Big], \\ \\
    \EU(Q) &= \bE_{\vtheta \sim Q}\Big[\bE_{y \sim \vtheta}\big[-2\btheta_y + \sumK\btheta_k^2 + 1 + 2\theta_y - \sumK\theta_k^2 - 1\big]\Big] \\
    % &= \bE_{\vtheta \sim Q}[\sumK \theta_k(-2\btheta_k + \sumK\btheta_k^2 + 2\theta_k - \sumK\theta_k^2)] \\
    % &= \bE_{\vtheta \sim Q}\left[\sumK \theta_k(-2\btheta_k + 2\theta_k) + \sumK\btheta_k^2 - \sumK\theta_k^2\right] \\
    &= \bE_{\vtheta \sim Q}\Big[\sumK \big(\theta_k(-2\btheta_k + 2\theta_k) + \btheta_k^2 - \theta_k^2\big)\Big] \\
    % &= \bE_{\vtheta \sim Q}[\sumK (-2\btheta_k\theta_k + 2\theta_k^2) + \sumK\btheta_k^2 - \sumK\theta_k^2] \\
    &= \bE_{\vtheta \sim Q}\Big[\sumK \big(-2\btheta_k\theta_k + 2\theta_k^2 + \btheta_k^2 - \theta_k^2\big)\Big] \\
    &= \bE_{\vtheta \sim Q}\Big[\sumK \big(\btheta_k^2 -2\btheta_k\theta_k + \theta_k^2\big)\Big] \\
    &= \bE_{\vtheta \sim Q}\Big[\sumK \big(\btheta_k - \theta_k\big)^2\Big]. \\
\end{align*}

\subsection{Zero-one loss}
With $\ell(\bma,y) = 1-\llbracket \argmax_k \btheta_k = y\rrbracket$, 
\begin{align*}
    \TU(Q) &= \bE_{\vtheta \sim Q}\Big[\bE_{y \sim \vtheta}\big[1 - \llbracket \argmax_k \btheta_k = y\rrbracket\big]\Big] \\
    &= \bE_{\vtheta \sim Q}\Big[\sumKK \theta_{k'}\big(1 - \llbracket \argmax_k \btheta_k = k'\rrbracket\big)\Big] \\
    &= \sumKK \bar{\theta}_{k'}\big(1 - \llbracket \argmax_k \btheta_k = k'\rrbracket\big) \\
    &= 1 - \max_k \btheta_k, \\ \\
    \AU(Q) &= \bE_{\vtheta \sim Q}\Big[\bE_{y \sim \vtheta}\big[1 - \llbracket \argmax_k \theta_k = y\rrbracket\big]\Big] \\
    &= \bE_{\vtheta \sim Q}[1 - \max_k \theta_k], \\ \\
    \EU(Q) &= \bE_{\vtheta \sim Q}\Big[\bE_{y \sim \vtheta}\big[1 - \llbracket \argmax_k \btheta_k = y\rrbracket - 1 + \llbracket \argmax_k \theta_k = y\rrbracket\big]\Big] \\
    &= \bE_{\vtheta \sim Q}\Big[\bE_{y \sim \vtheta}\big[\llbracket \argmax_k \theta_k = y\rrbracket - \llbracket \argmax_k \btheta_k = y\rrbracket\big]\Big] \\
    & = \bE_{\vtheta \sim Q}[\max_k \theta_k - \theta_{\underset{k}{\argmax}\  \btheta_k}].
\end{align*}

\section{Experimental Details}\label{app:details}
In the following, we provide additional details regarding the experimental setup. We split this into descriptions of the models and training procedure, the uncertainty methods and their application, and the downstream tasks.
The code is written in Python 3.10.12 and relies heavily on PyTorch \citep{pytorch2019}.

\subsection{Compute Resources}
The experiments were conducted using the compute resources summarized in \cref{tab:compute}. The reported results required approximately 50 GPU hours and an additional 10 CPU hours.

\begin{table}[ht] 
\centering
\begin{tabularx}{\textwidth}{XX}
\toprule
Resource & Details \\ \midrule
GPU      & 2x NVIDIA A40 48GB GDDR \\
CPU      & AMD EPYC MILAN 7413 24 Cores / 48 Threads \\
RAM      & 128GB DDR4-3200MHz ECC DIMM \\
Storage  & 2x 480GB Samsung Datacenter SSD PM893\\
\bottomrule
\end{tabularx}
\caption{Compute resources.}
\label{tab:compute}
\end{table}

\subsection{Datasets}
\begin{table}[ht]
\centering
\begin{tabularx}{\textwidth}{lXX}
\toprule
Dataset    & Reference & License \\\midrule
\dataset{CoverType}  & \citep{blackardCoverType1998} & CC BY. \\
\dataset{Poker Hand} & \citep{cattralPokerHand2006} & CC BY. \\ 
\dataset{CIFAR-10}   & \cite{krizhevsky2009learning} & Unknown. \\
\dataset{CIFAR-100}  & \citep{krizhevsky2009learning}    & Unknown.        \\
\dataset{Places365}     & \citep{zhouPlaces2018}          & CC BY. \\
\dataset{SVHN}       & \citep{netzerReading2011}          & Non-commercial use.         \\
\dataset{ImageNet}   & \citep{dengImageNet2009}          & Non-commercial research/educational use.        \\
\dataset{ImageNet-O} & \citep{hendrycksNatural2021}          & MIT License.        \\
\dataset{Food101}    & \citep{bossardFood2014}          & Unknown.        \\
\dataset{MNIST} & \citep{leCunGradientBased1998} & CC BY. \\
\dataset{FashionMNIST} & \citep{xiao2017fashion} & MIT License. \\
\dataset{MedMNIST} & \citep{yangMedmnistv22023} & CC BY. \\ \bottomrule
\end{tabularx}
\caption{Datasets with references and licenses.}
\label{tab:datasets}
\end{table}

Table~\ref{tab:datasets} lists all datasets used in our experiments. We use the dedicated train-test split for all datasets when available. For the \dataset{CoverType} and \dataset{Poker Hand} datasets, which do not provide such a split, we randomly partition the data into a $70\%-30\%$ train–test split.
During training on \dataset{CIFAR-10}, we normalize images using the per-channel mean and standard deviation of the training set and apply random cropping and horizontal flipping. For out-of-distribution detection all iD and OoD instances undergo the same transformations.

\subsection{Models}
We train the following models.

\paragraph{Random Forest.}
The \texttt{RandomForest} is fit on \dataset{CoverType} using 20 trees and a maximum depth of 5 and on \dataset{Poker Hand} with 20 trees and a maximum depth of 20. For the remaining hyper-parameters, we use the defaults provided by sklearn \citep{scikit-learn}. 

\paragraph{Multilayer Perceptron.}
The Multilayer Perceptron (MLP) consists of an input layer with 784 features with ReLU activations followed by a hidden layer with 100 features and ReLU activation, and an output layer whose size corresponds to the number of classes after which a softmax function is applied.

\paragraph{Convolutional Neural Network.} The Convolutional Neural Network (CNN) is based on the \texttt{LeNet5} architecture \citep{leCunGradientBased1998}. It takes a three-channel input and consists of two convolutional layers followed by two fully-connected layers. The first convolutional layer has 32 filters of size 5x5 and the second has 64 filters of size 5x5. Both layers are followed by a 2x2 max-pooling operation and a ReLU activation. The feature maps are flattened and passed to the first fully-connected layer with 800 features and ReLU activation. Finally, the size of the output layer depends on the number of classes and applies a softmax function.

The training hyper-parameters for the MLP and CNN depend on the dataset and are listed in \cref{tab:alconfig}.

\paragraph{ResNet.} We use the \texttt{ResNet18} \citep{heDeepResidual2016} implementation from Github\footnote{\url{https://github.com/kuangliu/pytorch-cifar}} for the \dataset{CIFAR-10} experiments. The ResNets are trained for 100 epochs using stochastic gradient descent with a learning rate of 0.001, weight decay at 5e-4 and momentum at 0.9. The cosine annealing learning rate scheduler \citep{loshchilovStochasticGradient2017} is used.

We use the following pre-trained models.
\paragraph{EfficientNet.} For the \dataset{ImageNet} experiments, we use the \texttt{EfficientNetV2S} implementation from PyTorch which was pre-trained on \dataset{ImageNet}.

\paragraph{VisionTransformer.} We use a \texttt{VisionTransformer} for the \dataset{Food101} experiments. It was pre-trained on \dataset{Imagenet21K} \citep{ridnikImagenet21K2021} and fine-tuned on \dataset{Food101}. It was downloaded from Hugging Face\footnote{\url{https://huggingface.co/nateraw/vit-base-food101}}.

\subsection{Uncertainty Representations}
We employ the following methods to enable the models to represent their uncertainty using second-order distributions or suitable approximations thereof.
\paragraph{Dropout.}
A Dropout layer before the final layer using the default probability $0.5$ from PyTorch is used. The other Dropout layers are turned off during evaluation.
\paragraph{Laplace Approximation.}
We use the Laplace package \citep{daxbergerLaplace2021} to fit the Laplace approximation on the last layer using the Kronecker-factored approximate curvature approximation, which are the default settings for this package. We obtain samples of the posterior by Monte Carlo sampling.
\paragraph{Deep Ensembles.} The deep ensemble is constructed by training 5 similar neural networks, relying on the randomness of the initialization and stochastic gradient descent to get diverse predictions \citep{lakshminarayananDeep2017}. 

\subsection{Downstream Tasks}
All tasks are run three times and for each run a new model is trained (only exception being pre-trained Dropout and Laplace models) and a random subset of the test data is sampled. For the downstream tasks, given an instance, we sample 20 conditional distributions in order to compute the total, aleatoric, and epistemic uncertainty. 
\paragraph{Selective Prediction.}
The selective prediction tasks are done using 10,000 instances that are randomly sampled from the dedicated test split of the datasets. 
\paragraph{Out-of-Distribution Detection.}
For the out-of-distribution detection task we also sample 10,000 instances from the test sets of the respective datasets, if possible. \dataset{ImageNet-O} has only 2,000 instances, thus we use all these instances and also sample 2,000 instances from the in-Distribution dataset. 
\paragraph{Active Learning.}
Active learning is performed by starting with training on only the initial instances. The model then uses epistemic uncertainty to sample new instances with a certain query budget. After this, the model is trained again and the performance on the dedicated test set of the respective dataset is evaluated. This process is repeated for 50 iterations. The settings used for the different datasets are shown in \cref{tab:alconfig}.  

\begin{table}[ht] 
\centering
\begin{tabularx}{\textwidth}{X>{\centering\arraybackslash}X>{\centering\arraybackslash}X>{\centering\arraybackslash}X>{\centering\arraybackslash}X}
\toprule
Dataset & Initial Instances & Query Budget & Learning Rate & Epochs \\ \midrule
\dataset{MNIST}   & 100               & 100          & 0.01          & 50     \\
\dataset{TissueMNIST} & 500 & 500 & 0.01 & 50 \\
\dataset{BloodMNIST} & 200 & 200 & 0.001 & 100 \\
\dataset{FashionMNIST}  & 500               & 500          & 0.01          & 50     \\ 
\dataset{OrganCMNIST} & 200 & 200 & 0.01 & 50 \\
\dataset{PathMNIST} & 1000 & 1000 & 0.001 & 100 \\
\bottomrule
\end{tabularx}
\caption{Parameters for the active learning experiments.}
\label{tab:alconfig}
\end{table}

\section{Additional Results}\label{app:results}
\subsection{Selective Prediction}\label{app:selective-prediction}
We present the selective prediction results for the \dataset{CoverType} dataset with aleatoric and epistemic uncertainty as the rejection criteria in 
\cref{fig:app-sp-au-eu-covtype}, confirming that the best performance is obtained by using total uncertainty as the rejection criterion.

\begin{figure*}[ht]
  \centering
  \includegraphics[width=.95\linewidth]{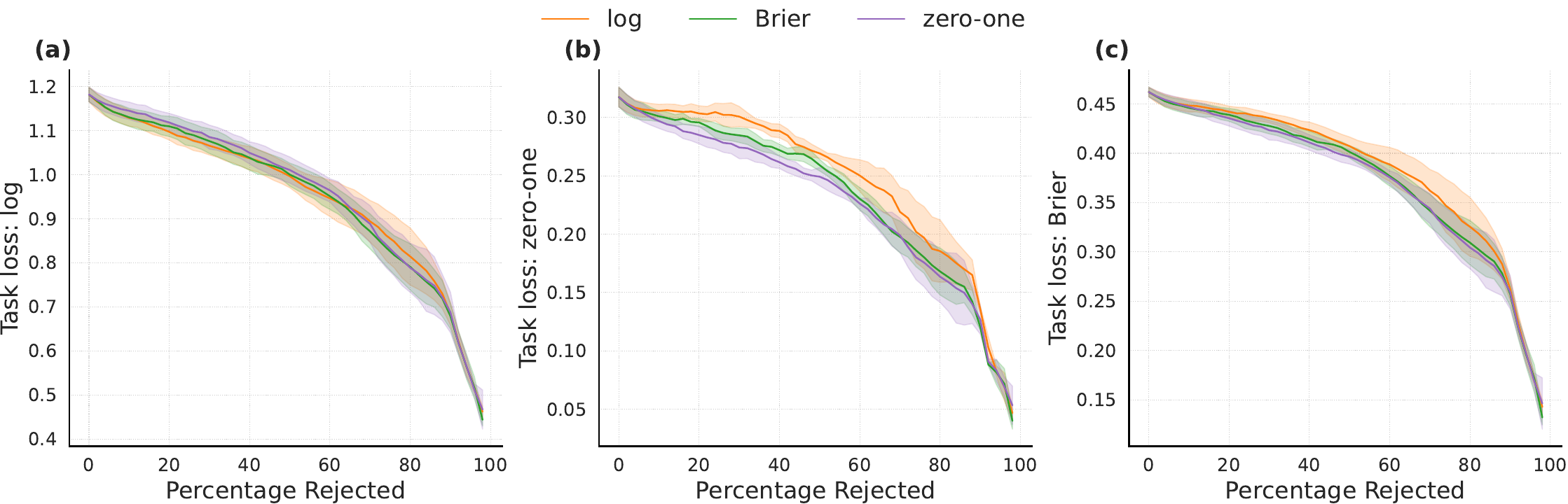}
  \includegraphics[width=.95\linewidth]{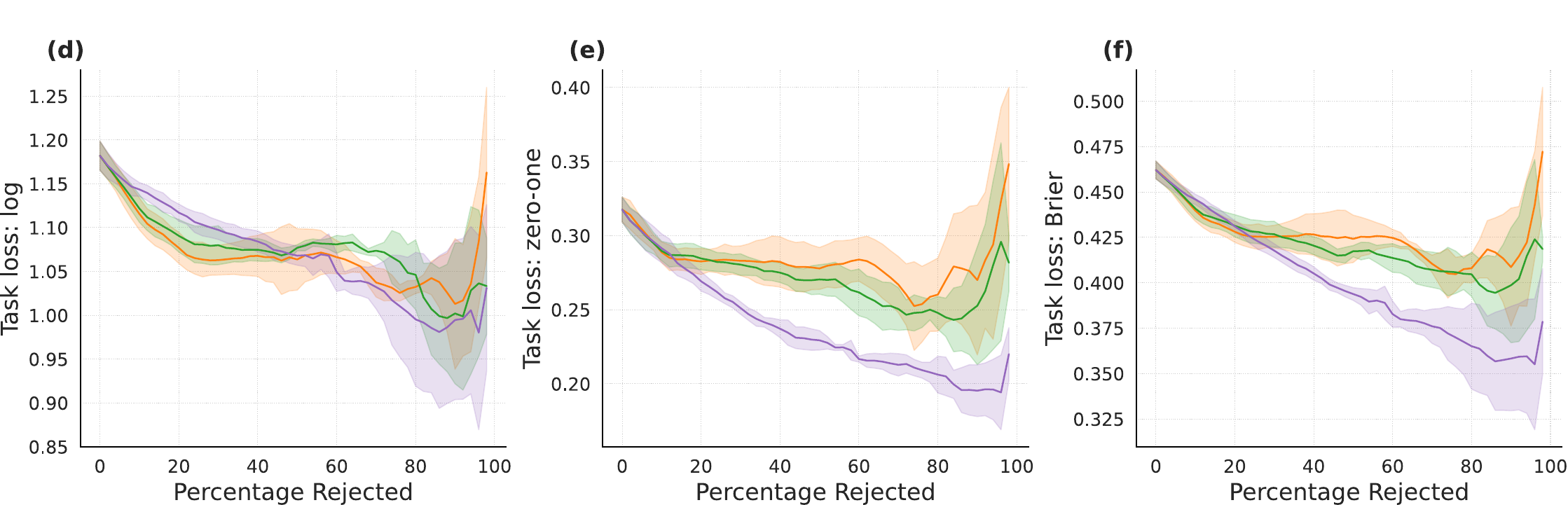}
\caption{Selective prediction with different task losses using the aleatoric (\textbf{top}) and epistemic (\textbf{bottom}) uncertainty component as the rejection criterion where \textbf{(a, d)} uses the \emph{log loss} as task loss, \textbf{(b, e)} \emph{zero-one loss}, and \textbf{(c, g)} the \emph{Brier loss}, respectively. \emph{Results are averaged over three runs.}}
\label{fig:app-sp-au-eu-covtype}
\end{figure*}

We also present selective prediction results for the \dataset{Poker Hand} dataset with total, aleatoric, and epistemic in \cref{fig:app-sp-tu-au-eu-pokerhand}. These results show the same behavior as for the \dataset{CoverType} dataset: The uncertainty loss should be aligned with the task loss and instances should be rejected based on the corresponding total uncertainty measure. The results are summarized in \cref{tab:aulc-cover,tab:aulc-poker}, respectively, which show the area under the loss curve values for all combinations of task losses and uncertainty losses.

\begin{figure*}[h!]
  \centering
  \includegraphics[width=.95\linewidth]{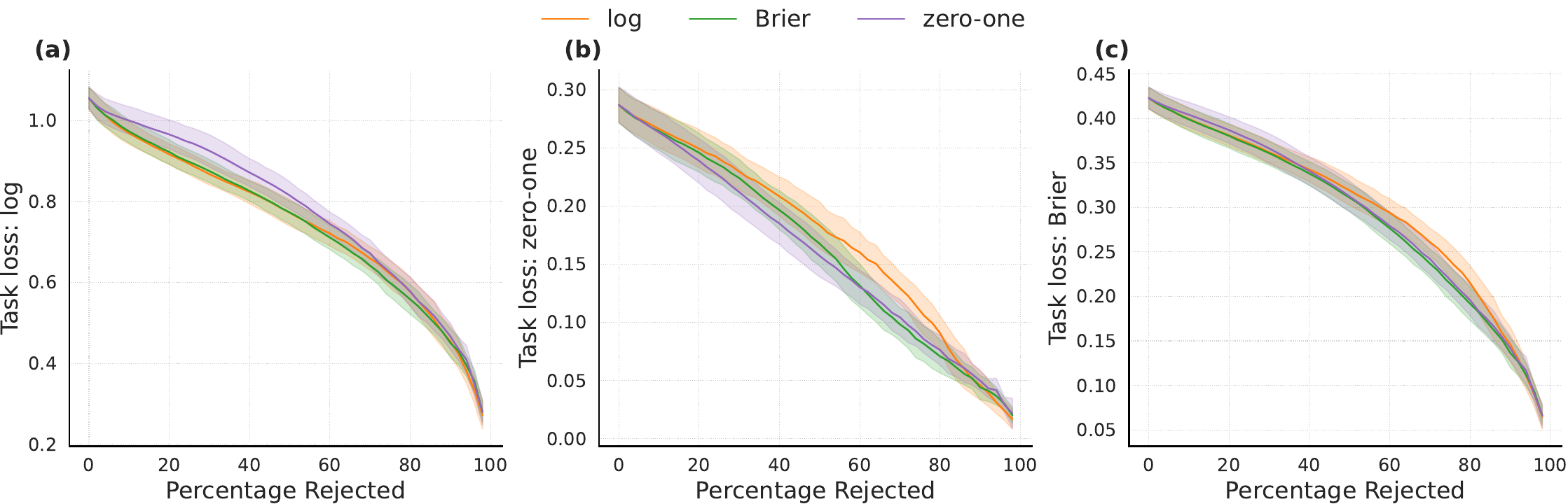}
  \includegraphics[width=.95\linewidth]{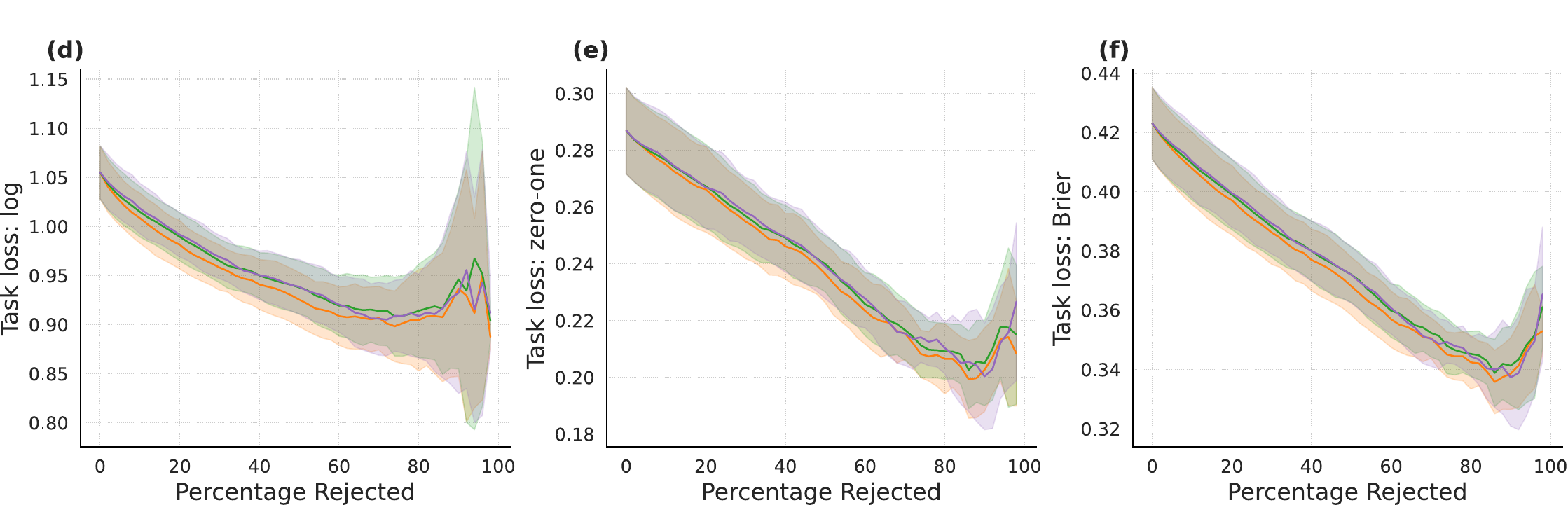}
  \includegraphics[width=.95\linewidth]{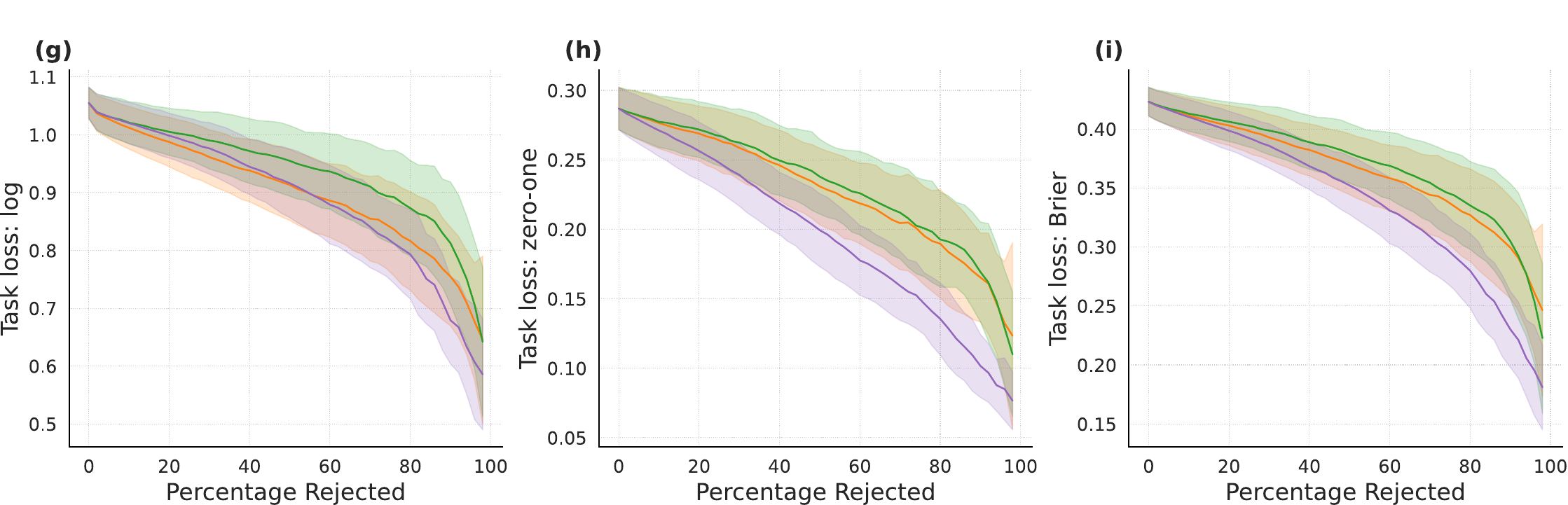}
\caption{Selective prediction with different task losses using the total (\textbf{top}), aleatoric (\textbf{middle}), and epistemic (\textbf{bottom}) uncertainty component as the rejection criterion where \textbf{(a, d, g)} uses the \emph{log loss} as task loss, \textbf{(b, e, h)} \emph{zero-one loss}, and \textbf{(c, g, i)} the \emph{Brier loss}, respectively. \emph{Results are averaged over three runs.}}
\label{fig:app-sp-tu-au-eu-pokerhand}
\end{figure*}

\begin{table}[h]
\begin{tabularx}{\textwidth}{lXXXXXXXXX}
\toprule
         & \multicolumn{3}{c}{total}                                                          & \multicolumn{3}{c}{aleatoric} & \multicolumn{3}{c}{epistemic} \\ \midrule
         & \multicolumn{1}{c}{log} & \multicolumn{1}{c}{Brier} & \multicolumn{1}{c}{zero-one} & log    & Brier   & zero-one   & log    & Brier   & zero-one   \\
log      & $ 0.916 \scriptstyle{ \pm 0.021 }$ & $ 0.915 \scriptstyle{ \pm 0.018 }$ & $ 0.942 \scriptstyle{ \pm 0.017 }$ & $ 0.938 \scriptstyle{ \pm 0.030 }$ & $ 0.936 \scriptstyle{ \pm 0.028 }$ & $ 0.945 \scriptstyle{ \pm 0.030 }$ & $ 1.047 \scriptstyle{ \pm 0.011 }$ & $ 1.053 \scriptstyle{ \pm 0.016 }$ & $ 1.047 \scriptstyle{ \pm 0.025 }$ \\
zero-one & $ 0.234 \scriptstyle{ \pm 0.003 }$ & $ 0.217 \scriptstyle{ \pm 0.003 }$ & $ 0.204 \scriptstyle{ \pm 0.004 }$ & $ 0.243 \scriptstyle{ \pm 0.008 }$ & $ 0.230 \scriptstyle{ \pm 0.006 }$ & $ 0.225 \scriptstyle{ \pm 0.007 }$ & $ 0.276 \scriptstyle{ \pm 0.011 }$ & $ 0.266 \scriptstyle{ \pm 0.005 }$ & $ 0.233 \scriptstyle{ \pm 0.001 }$ \\
Brier    & $ 0.368 \scriptstyle{ \pm 0.008 }$ & $ 0.358 \scriptstyle{ \pm 0.005 }$ & $ 0.356 \scriptstyle{ \pm 0.005 }$ & $ 0.376 \scriptstyle{ \pm 0.012 }$ & $ 0.368 \scriptstyle{ \pm 0.010 }$ & $ 0.366 \scriptstyle{ \pm 0.010 }$ & $ 0.417 \scriptstyle{ \pm 0.002 }$ & $ 0.412 \scriptstyle{ \pm 0.008 }$ & $ 0.391 \scriptstyle{ \pm 0.008 }$ \\ \bottomrule
\end{tabularx}
\caption{Area under the loss curve (AULC) for \dataset{CoverType} with different task losses using the total, aleatoric, and epistemic uncertainty component as the rejection criterion. \textit{The mean and standard deviation of the AULC over three runs are reported.}}
\label{tab:aulc-cover}
\end{table}

\begin{table}[h]
\begin{tabularx}{\textwidth}{lXXXXXXXXX}
\toprule
         & \multicolumn{3}{c}{total}                                                          & \multicolumn{3}{c}{aleatoric} & \multicolumn{3}{c}{epistemic} \\ \midrule
         & \multicolumn{1}{c}{log} & \multicolumn{1}{c}{Brier} & \multicolumn{1}{c}{zero-one} & log    & Brier   & zero-one   & log    & Brier   & zero-one   \\
log      & $ 0.736 \scriptstyle{ \pm 0.030 }$ & $ 0.734 \scriptstyle{ \pm 0.029 }$ & $ 0.764 \scriptstyle{ \pm 0.033 }$ & $ 0.925 \scriptstyle{ \pm 0.031 }$ & $ 0.935 \scriptstyle{ \pm 0.033 }$ & $ 0.934 \scriptstyle{ \pm 0.031 }$ & $ 0.884 \scriptstyle{ \pm 0.060 }$ & $ 0.919 \scriptstyle{ \pm 0.062 }$ & $ 0.875 \scriptstyle{ \pm 0.056 }$ \\
zero-one & $ 0.171 \scriptstyle{ \pm 0.015 }$ & $ 0.159 \scriptstyle{ \pm 0.015 }$ & $ 0.156 \scriptstyle{ \pm 0.015 }$ & $ 0.233 \scriptstyle{ \pm 0.009 }$ & $ 0.235 \scriptstyle{ \pm 0.009 }$ & $ 0.236 \scriptstyle{ \pm 0.009 }$ & $ 0.224 \scriptstyle{ \pm 0.027 }$ & $ 0.228 \scriptstyle{ \pm 0.027 }$ & $ 0.193 \scriptstyle{ \pm 0.022 }$ \\
Brier    & $ 0.295 \scriptstyle{ \pm 0.015 }$ & $ 0.286 \scriptstyle{ \pm 0.015 }$ & $ 0.290 \scriptstyle{ \pm 0.015 }$ & $ 0.364 \scriptstyle{ \pm 0.008 }$ & $ 0.366 \scriptstyle{ \pm 0.008 }$ & $ 0.366 \scriptstyle{ \pm 0.008 }$ & $ 0.357 \scriptstyle{ \pm 0.027 }$ & $ 0.363 \scriptstyle{ \pm 0.027 }$ & $ 0.333 \scriptstyle{ \pm 0.023 }$ \\ \bottomrule
\end{tabularx}
\caption{Area under the loss curve (AULC) for \dataset{Poker Hand} with different task losses using the total, aleatoric, and epistemic uncertainty component as the rejection criterion. \textit{The mean and standard deviation of the AULC over three runs are reported.}}
\label{tab:aulc-poker}
\end{table}

\clearpage 

\newpage 
\subsection{Out-of-Distribution Detection}\label{app:ood}
We present additional out-of-distribution results for the \dataset{ImageNet} and \dataset{Food101} datasets with different uncertainty representations in \cref{tab:app-ood-imagenet} and \cref{tab:app-ood-food101}, respectively. This confirms that mutual information, the log-based epistemic uncertainty measure performs best for the out-of-distribution downstream task. 

\begin{table*}[h!]
\centering
\begin{tabularx}{\textwidth}{XX>{\centering\arraybackslash}X>{\centering\arraybackslash}X>{\centering\arraybackslash}X}
\toprule
Dataset                      & Method & log & Brier & zero-one \\ \midrule
\multirow{2}{*}{\dataset{ImageNet-O}} & Dropout & $ \textbf{0.711} \scriptstyle{\pm 0.009} $ & $ 0.688 \scriptstyle{\pm 0.008} $ & $ 0.550 \scriptstyle{\pm 0.006} $ \\
                            & Laplace & $ \textbf{0.789} \scriptstyle{\pm 0.006} $ & $ 0.713 \scriptstyle{\pm 0.005} $ & $ 0.678 \scriptstyle{\pm 0.008} $ \\\midrule
\multirow{2}{*}{\dataset{CIFAR-100}} & Dropout & $ \textbf{0.876} \scriptstyle{\pm 0.002} $ & $ 0.753 \scriptstyle{\pm 0.002} $ & $ 0.721 \scriptstyle{\pm 0.002} $ \\
                            & Laplace & $ \textbf{0.935} \scriptstyle{\pm 0.001} $ & $ 0.892 \scriptstyle{\pm 0.002} $ & $ 0.894 \scriptstyle{\pm 0.002} $ \\\midrule
\multirow{2}{*}{\dataset{Places365}}  & Dropout     & $ \textbf{0.809} \scriptstyle{\pm 0.001} $ & $ 0.744 \scriptstyle{\pm 0.001} $ & $ 0.671 \scriptstyle{\pm 0.001} $ \\
                            & Laplace & $ \textbf{0.811} \scriptstyle{\pm 0.001} $ & $ 0.732 \scriptstyle{\pm 0.003} $ & $ 0.780 \scriptstyle{\pm 0.002} $ \\\midrule
\multirow{2}{*}{\dataset{SVHN}}       & Dropout     & $ \textbf{0.969} \scriptstyle{\pm 0.001} $ & $ 0.580 \scriptstyle{\pm 0.002} $ & $ 0.857 \scriptstyle{\pm 0.002} $ \\
                            & Laplace & $ \textbf{0.994} \scriptstyle{\pm 0.000} $ & $ 0.956 \scriptstyle{\pm 0.001} $ & $ 0.983 \scriptstyle{\pm 0.001} $ \\ \bottomrule 
\end{tabularx}
\caption{OoD detection with \dataset{ImageNet} as in-Distribution data based on epistemic uncertainty. \emph{The mean and standard deviation of the AUROC over three runs are reported.} Best results are highlighted in \textbf{bold}.}
\label{tab:app-ood-imagenet}
\end{table*}

\begin{table*}[h!]
\centering
\begin{tabularx}{\textwidth}{XX>{\centering\arraybackslash}X>{\centering\arraybackslash}X>{\centering\arraybackslash}X}
\toprule
Dataset                      & Method & log & Brier & zero-one \\ \midrule
\multirow{2}{*}{\dataset{CIFAR-100}}  & Dropout & $ \textbf{0.990} \scriptstyle{\pm 0.000} $ & $ 0.802 \scriptstyle{\pm 0.002} $ & $ 0.921 \scriptstyle{\pm 0.001} $ \\
                            & Laplace & $ \textbf{0.998} \scriptstyle{\pm 0.000} $ & $ 0.996 \scriptstyle{\pm 0.000} $ & $ 0.997 \scriptstyle{\pm 0.000} $ \\\midrule
\multirow{2}{*}{\dataset{Places365}}  & Dropout & $ \textbf{0.987} \scriptstyle{\pm 0.000} $ & $ 0.803 \scriptstyle{\pm 0.002} $ & $ 0.917 \scriptstyle{\pm 0.002} $ \\
                            & Laplace & $ \textbf{0.996} \scriptstyle{\pm 0.000} $ & $ 0.993 \scriptstyle{\pm 0.000} $ & $ 0.994 \scriptstyle{\pm 0.000} $ \\\midrule
\multirow{2}{*}{\dataset{SVHN}}       & Dropout & $ \textbf{0.971} \scriptstyle{\pm 0.000} $ & $ 0.785 \scriptstyle{\pm 0.003} $ & $ 0.934 \scriptstyle{\pm 0.001} $ \\
                            & Laplace & $ \textbf{0.999} \scriptstyle{\pm 0.000} $ & $ 0.995 \scriptstyle{\pm 0.000} $ & $ 0.996 \scriptstyle{\pm 0.000} $ \\ \bottomrule 
\end{tabularx}
\caption{OoD detection with \dataset{Food101} as in-Distribution data based on epistemic uncertainty. \emph{The mean and standard deviation of the AUROC over three runs are reported.} Best results are highlighted in \textbf{bold}.}
\label{tab:app-ood-food101}
\end{table*}

\subsection{Active Learning}\label{app:active-learning}
We present active learning results with additional datasets in \cref{fig:app-al}, confirming the good performance of the zero-one-based epistemic uncertainty measure for this task. 

\begin{figure*}[h!]
  \centering
  \includegraphics[width=.95\linewidth]{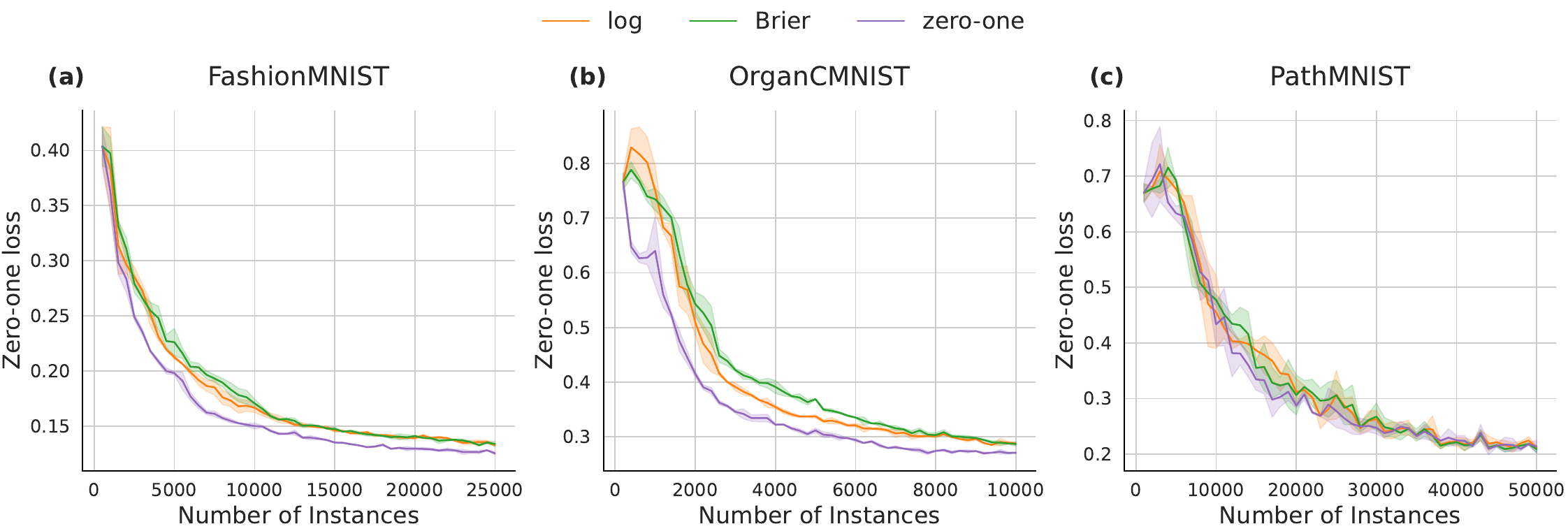}
\caption{Active learning with different datasets using the epistemic uncertainty component to query new instances, where \textbf{(a)} is based on the \dataset{FashionMNIST} dataset, \textbf{(b)} \dataset{OrganCMNIST}, and \textbf{(c)} \dataset{PathMNIST}. \emph{Results are averaged over three runs.}}
\label{fig:app-al}
\end{figure*}

\end{document}